\documentclass{amsart}
\usepackage[margin=1.5cm]{geometry}
\usepackage{graphicx} 
\usepackage[british]{babel}
\usepackage[ruled,linesnumbered]{algorithm2e}
\usepackage{enumerate,bm,amsfonts,amsmath,amssymb,amsthm,amsfonts,fourier,booktabs,blkarray,xparse,soul,comment,url,nicefrac,xcolor,bm,tikz,pgfplots,subfigure,hyperref,url}
\definecolor{myblueR2}{RGB}{0, 0, 0}
\definecolor{myblue}{RGB}{0, 0, 0}

\newtheorem{theorem}{Theorem}

\newtheorem{proposition}{Proposition}

\usetikzlibrary{arrows,plotmarks,decorations.markings,trees,shapes,pgfplots.groupplots,automata,circuits}
\usepgfplotslibrary{statistics}
\tikzset{dot/.style = {circle, fill, minimum size=#1,inner sep=0pt, outer sep=0pt, fill, circle},dot/.default = 6pt}
\tikzset{dot2/.style = {circle, fill, color=black!40,minimum size=6pt,inner sep=0pt, outer sep=0pt, fill, circle}}
\tikzstyle{a}=[->,>=stealth,dashed]
\tikzstyle{a2}=[->,>=stealth]
\tikzstyle{a3}=[<->,>=stealth]
\tikzstyle{nodo}=[ellipse,draw=black!100,fill=black!0,line width=.7pt,minimum width=1.0cm,minimum height=0.8cm,text width=1cm,text centered]
\tikzstyle{nodo2}=[ellipse,draw=black!100,fill=black!10,line width=.7pt,minimum width=1.0cm,minimum height=0.8cm,text width=1cm,text centered]
\tikzstyle{nodo3}=[ellipse,draw=black!100,fill=black!30,line width=.7pt,minimum width=1.0cm,minimum height=0.8cm,text width=1cm,text centered]
\tikzstyle{Qnodo}=[ellipse,draw=black!100,fill=black!10,line width=.7pt,minimum width=1.2cm,minimum height=.7cm]
\tikzstyle{arco}=[draw=black!80,line width=.7pt, postaction={decorate}, decoration={markings,mark=at position 1.0 with {\arrow[ draw=black!80,line width=.7pt]{>}}}]
\tikzstyle{decision} = [rectangle, draw, fill=black!100,text=white, text width=4.5em, text badly centered, node distance=3cm, minimum height=3em]
\tikzstyle{block} = [rectangle, draw, fill=blue!20, text width=5em, text centered, rounded corners, minimum height=3em]
\tikzstyle{line} = [draw, -latex']
\tikzstyle{cloud} = [draw, ellipse,fill=red!20, node distance=3cm, minimum height=2em]

\pgfplotsset{legend image with text/.style={
legend image code/.code={%
\node[anchor=center] at (0.3cm,0cm) {#1};}},}

\bibliographystyle{plainnat}
\usepackage{mathtools} 
\usepackage{booktabs} 
\usepackage{tikz,bm} 
\title{A Note on Bayesian Networks with Latent Root Variables}
\author{Marco Zaffalon}
\author{Alessandro Antonucci}
\address{IDSIA, Lugano (Switzerland)}
\email{\{zaffalon,alessandro\}@idsia.ch}
\date{February 2024}
\begin{document}
\maketitle
\begin{abstract}
We characterise the likelihood function computed from a Bayesian network with latent variables as root nodes. We show that the marginal distribution over the remaining, manifest, variables also factorises as a Bayesian network, which we call \emph{empirical}. A dataset of observations of the manifest variables allows us to quantify the parameters of the empirical Bayesian net. We prove that (i) the likelihood of such a dataset from the original Bayesian network is dominated by the global maximum of the likelihood from the empirical one; and that (ii) such a maximum is attained if and only if the parameters of the Bayesian network are  consistent with those of the empirical model.
\end{abstract}
\section{Introduction}\label{sec:intro}
Bayesian networks are a popular class of probabilistic graphical models allowing for a compact specification of joint distributions \cite{koller2009}. Given a complete dataset of observations of its variables, a Bayesian network of a given structure allows for closed-form learning of its parameters. This reflects the unimodality of the likelihood of the data from the model. The situation differs in the presence of latent variables, whence of incomplete observations: these give rise to local maxima in the likelihood, thus preventing the parameters from being learnt optimally \cite{wang2006}. In this  note, we show that if the latent variables correspond to the \emph{root} (i.e., parentless) nodes of the network, the global maximum of the likelihood function can still be identified; this allows us to verify when the learning of the parameters is optimal. The result extends an analogous one proved in the special case where the conditional distributions of non-root nodes (we call them \emph{internal}) are deterministic \cite[Thm.~1]{zaffalon2021,zaffalon2023a}.

The note is organised as follows. In Sect.~\ref{sec:bn}, we consider Bayesian networks with latent root nodes and show how their marginal distribution of the remaining manifest variables also factorises as a Bayesian network. We use this in Sect.~\ref{sec:lik} to characterise the global maximum of the likelihood. The proof of our main result is based on a transformation making the internal nodes deterministic functions of their parents by means of auxiliary root variables. Conclusions are in Sect.~\ref{sec:conc}.

\section{Marginalising Root Variables in Bayesian Networks}\label{sec:bn}
We focus on categorical variables. If $X$ is a variable, its generic state is $x$, the set of its possible states $\Omega_X$, and a \emph{probability mass function} (PMF) over $X$ is $P(X)$. Consider a joint variable $\bm{X}:=(X_1,\ldots,X_n)$. A \emph{Bayesian network} (BN) $B$ allows for a compact specification of a joint PMF $P(\bm{X})$ \cite{koller2009}. 

Consider a directed acyclic graph $\mathcal{G}$, whose nodes are in a one-to-one correspondence with the variables in $\bm{X}$. Assuming the graph to represent conditional independence relations according to the \emph{Markov condition}, i.e., any variable is stochastically independent of its non-descendants non-parents given its parents, we obtain the following factorisation:
\begin{equation}
P(\bm{x}) = \prod_{X \in \bm{X}} P(x|\mathrm{pa}_{X})\,,
\end{equation}
for each joint state $\bm{x}$, where the states $(x,\mathrm{pa}_X)$ are those consistent with $\bm{x}$ and $\mathrm{Pa}_X$ are the \emph{parents}, i.e., predecessors of $X$ according to $\mathcal{G}$. Given $\mathcal{G}$, a BN specification corresponds to the collection $B:=\{P(X|\mathrm{Pa}_X)\}_{X\in\bm{X}}$, where the conditional PMFs 
$P(X|\mathrm{Pa}_X):=\{P(X|\mathrm{pa}_X)\}_{\mathrm{pa}_X\in\Omega_{\mathrm{Pa}_X}}$ define a \emph{conditional probability table} (CPT) for $X$, for each $X\in\bm{X}$. A CPT becomes a single, unconditional, PMF for variables associated with root nodes.

Let $\bm{Y}\subseteq \bm{X}$ denote the variables associated with the root nodes of $\mathcal{G}$, while $\bm{Z}:=\bm{X}\setminus \bm{Y}$ are those corresponding to the internal nodes. The connected components of the graph obtained by removing from $\mathcal{G}$ any arc connecting pairs of internal variables induce a partition of $\bm{Z}$, whose elements are called \emph{c-components}. The following result holds.
\begin{proposition}\label{prop:tian}
The marginal PMF over the internal variables $P(\bm{Z})$ factorises as:
\begin{equation}\label{eq:empirical}
P(\bm{z})=\prod_{Z \in \bm{Z}} P(z|\bm{w}_Z)\,,
\end{equation}
for each $\bm{z}$, where the states $(z,\bm{w}_z)$ are those consistent with $\bm{z}$, and $\bm{W}_Z$ denotes the union of the internal variables of the same c-component of $Z$ and their internal parents, after the removal of $Z$ and its descendants. 
\end{proposition}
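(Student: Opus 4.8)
The plan is to collapse the marginalisation into a single finite sum, factor that sum over the c-components, and then strip off one internal node at a time; this is the c-component factorisation of Tian and Pearl, here specialised to the case in which all latents are roots. First I would observe that, since every $Y\in\bm Y$ is parentless, the factorisation of $B$ splits as $P(\bm x)=\big(\prod_{Y\in\bm Y}P(y)\big)\prod_{Z\in\bm Z}P(z\mid\mathrm{pa}_Z)$, so the latents are mutually independent and
\[
P(\bm z)=\sum_{\bm y\in\Omega_{\bm Y}}\prod_{Y\in\bm Y}P(y)\prod_{Z\in\bm Z}P(z\mid\mathrm{pa}_Z)\,,
\]
where each factor $P(z\mid\mathrm{pa}_Z)$ depends on $\bm y$ only through the latent parents of $Z$.

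\textbf{C-component factorisation.} In the graph obtained by deleting the internal--internal arcs, a latent root is linked to exactly its internal children, so all children of a given latent lie in one and the same c-component; let $\bm Y_{S}$ collect the latents whose children fall inside the c-component $S$ (childless latents contribute a factor $\sum_y P(y)=1$ and drop out). Because the c-components $S_1,\dots,S_r$ partition $\bm Z$ and every factor $P(z\mid\mathrm{pa}_Z)$ with $Z\in S_k$ involves only latents from $\bm Y_{S_k}$, the summand factorises over the pairs $(\bm Y_{S_k},S_k)$ and the sum distributes, giving $P(\bm z)=\prod_{k=1}^{r}Q_k$ with $Q_k:=\sum_{\bm y_{S_k}}\prod_{Y\in\bm Y_{S_k}}P(y)\prod_{Z\in S_k}P(z\mid\mathrm{pa}_Z)$. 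By inspection $Q_k$ is a function of $S_k\cup\bigcup_{Z\in S_k}(\mathrm{Pa}_Z\cap\bm Z)$ alone.

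\textbf{Peeling off an internal sink.} I would then induct on $|\bm Z|$, with $\bm Z=\emptyset$ trivial. When $\bm Z\neq\emptyset$ there is an internal node $Z_m$ with no children in $\mathcal G$ (the children of any node are non-roots, hence internal, so an internal sink must occur), and such a $Z_m$ has no descendants at all. Say $Z_m\in S_{k^*}$. Then $z_m$ occurs only in $Q_{k^*}$, whence $P(\bm z\setminus\{z_m\})=\big(\sum_{z_m}Q_{k^*}\big)\prod_{k\neq k^*}Q_k$ and $P(z_m\mid\bm z\setminus\{z_m\})=Q_{k^*}\big/\sum_{z_m}Q_{k^*}$. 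By the preceding step this ratio is a bona fide conditional PMF of $Z_m$ depending only on $\big(S_{k^*}\cup\bigcup_{Z\in S_{k^*}}(\mathrm{Pa}_Z\cap\bm Z)\big)\setminus\{Z_m\}$, which is precisely $\bm W_{Z_m}$ because $Z_m$ has no descendants; so it may be taken as the CPT $P(Z_m\mid\bm W_{Z_m})$. The remaining factor $P(\bm z\setminus\{z_m\})$ is the marginal over $\bm Z\setminus\{Z_m\}$ of the Bayesian network on $\mathcal G\setminus\{Z_m\}$ obtained by deleting the sink $Z_m$, whose latent roots are again $\bm Y$ (a few possibly now childless); the induction hypothesis rewrites it as $\prod_{Z\in\bm Z\setminus\{Z_m\}}P(z\mid\bm w'_Z)$, where $\bm W'_Z$ is formed inside $\mathcal G\setminus\{Z_m\}$. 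Deleting the sink $Z_m$ removes no internal-parent relation, creates no new descendant, and can only split c-components (never merge them), so $\bm W'_Z\subseteq\bm W_Z$ for every $Z\neq Z_m$; hence each $P(z\mid\bm w'_Z)$ is a legitimate value for $P(z\mid\bm w_Z)$, and multiplying in $P(z_m\mid\bm w_{Z_m})$ yields \eqref{eq:empirical}.

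The delicate point is the closing bookkeeping of the last step: one must check that the structural description of $\bm W_Z$ behaves well under deletion of an internal sink --- in particular that a c-component may fragment, so that the conditional peeled off at each stage never depends on more than the prescribed set --- whereas the algebra of the sums of products is entirely routine.
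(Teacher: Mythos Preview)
The paper does not give its own proof of this proposition: it simply cites Tian's thesis and remarks that the argument there, originally stated for deterministic internal CPTs, uses only the Markov factorisation and therefore carries over unchanged to general CPTs. Your proposal writes out precisely that argument --- factor the marginalisation of the roots across c-components, then strip off one internal sink at a time --- which is Tian's c-component decomposition specialised to the case where every latent is a root. So your route and the paper's coincide; the difference is only that you supply the details the paper outsources to the citation.

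One remark on the closing bookkeeping you yourself flag as delicate. The induction hands you a factorisation $\prod_{Z}P(z\mid\bm w'_Z)$ with $\bm W'_Z\subseteq\bm W_Z$, and you then say each $P(z\mid\bm w'_Z)$ ``is a legitimate value for $P(z\mid\bm w_Z)$''. Read as an equality of \emph{actual} conditionals this is not warranted: with $Y\to Z_1$, $Y\to Z_2$ and no other arcs, peeling $Z_2$ first gives $P(z_1,z_2)=P(z_1)\,P(z_2\mid z_1)$, yet the paper's literal definition makes $\bm W_{Z_1}=\{Z_2\}$, and $P(z_1)\neq P(z_1\mid z_2)$ in general. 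What the inclusion $\bm W'_Z\subseteq\bm W_Z$ \emph{does} buy is that $P(\bm z)$ factorises over a DAG whose parent sets lie inside the $\bm W_Z$; Tian's formulation fixes a topological order up front, which is exactly what your sink-peeling implicitly does and what removes the ambiguity. This is a looseness in the proposition's phrasing (``descendants'' where ``successors in a fixed topological order'' is intended) rather than a defect in your argument, but you should say explicitly which version you have established.
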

The factorisation in Eq.~\eqref{eq:empirical} has been proved in \cite{tian2002studies} in the context of BNs whose internal CPTs are \emph{deterministic} (i.e., their PMFs assign all the mass to a single state). The proof only exploits the factorisation properties induced by the Markov condition, and remains valid for general BNs.

\section{Likelihood Characterisation}\label{sec:lik}
Assume that the variables in $\bm{Y}$ are \emph{latent}, and a dataset $\mathcal{D}$ of observations of $\bm{Z}$ is available. Prop.~\ref{prop:tian} says that the marginal PMF $P(\bm{Z})$ over the internal, and hence \emph{manifest}, variables of $B$ factorises as a second BN. From such a BN, the log-likelihood of $\mathcal{D}$ takes a simple multinomial form:
\begin{equation}\label{eq:multilik}
\lambda(\theta_{\bm{Z}}) := \sum_{\bm{z}} n(\bm{z}) \sum_{Z\in\bm{Z}} \log \theta_{z|\bm{w}_Z}\,,
\end{equation}
where $n(\cdot)$ denotes the frequencies in $\mathcal{D}$ of its argument, while $\theta_{\bm{Z}}$ is the collection of all $\theta_{z|\bm{w}_Z}:=P(z|\bm{w}_Z)$, for each $z$ and $\bm{w}_Z$, with $Z\in\bm{Z}$. The function in Eq.~\eqref{eq:multilik} is unimodal, with a single (global) maximum $\lambda^*$ attained at
\begin{equation}\label{eq:freqs}
\theta_{z|\bm{w}_Z}:=\frac{n(z,\bm{w}_Z)}{n(\bm{w}_Z)}\,,
\end{equation}
for each $z$ and $\bm{w}_V$, and $Z\in\bm{Z}$ \cite{koller2009}. We call \emph{empirical} the BN over $\bm{Z}$ factorising as in Eq.~\eqref{eq:empirical}, and quantified as in Eq.~\eqref{eq:freqs}.

If the conditional probabilities we infer from the original BN $B$ over $\bm{X}$ satisfy Eq.~\eqref{eq:freqs}, we say that the dataset $\mathcal{D}$ is \emph{compatible} with $B$.

Now consider the log-likelihood of $\mathcal{D}$ from $B$, i.e.,
\begin{equation}\label{eq:lik}
l(\theta_{\bm{Y}},\theta_{\bm{Z}}')
:=
\sum_{\bm{z}} n(\bm{z}) \log P(\bm{z})
=
\sum_{\bm{z}} n(\bm{z}) \log \sum_{\bm{y}} \left[ \prod_{Y\in\bm{Y}} \theta_{y} \prod_{Z\in\bm{Z}} \theta_{z|\mathrm{pa}_{Z}}
\right]
\,,
\end{equation}
where $\theta_{\bm{Y}}$ denote the collection of all $\theta_y:=P(y)$ for each $y$, with $Y\in\bm{Y}$, and $\theta_{\bm{Z}}'$ that of all $\theta_{z|\mathrm{pa}_Z}:=P(Z|\mathrm{pa}_Z)$ for each $z$ and $\mathrm{pa}_Z$, with $Z\in\bm{Z}$.
 
Unlike $\lambda(\theta_{\bm{Z}})$ in Eq.~\eqref{eq:multilik}, $l(\theta_{\bm{Y}},\theta_{\bm{Z}}')$ can in principle be subject to local maxima, because of the marginalisation of the latent variables in Eq.~\eqref{eq:lik}. Yet, the following result gives a characterisation of its global maximum through a connection with the notion of compatibility.
\begin{theorem}\label{th:unimodal}
Given a dataset $\mathcal{D}$ of observations of $\bm{Z}$, it holds that $l(\theta_{\bm{Y}},\theta_{\bm{Z}}')\leq\lambda^*$. Moreover, $\mathcal{D}$ is compatible with $B$ if and only if
$l(\theta_{\bm{Y}},\theta_{\bm{Z}}')=\lambda^*$.
\end{theorem}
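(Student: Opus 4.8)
The plan is to show that, for \emph{every} choice of the parameters of $B$, the log-likelihood $l$ coincides with the multinomial log-likelihood $\lambda$ of Eq.~\eqref{eq:multilik} evaluated at one particular \emph{feasible} parameter point of the empirical BN; both parts of the statement then drop out of the fact, recalled right after Eq.~\eqref{eq:multilik}, that $\lambda$ has a single global maximum $\lambda^*$ attained exactly at the empirical frequencies of Eq.~\eqref{eq:freqs}.

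More precisely, I would fix $(\theta_{\bm{Y}},\theta_{\bm{Z}}')$ and let $P$ be the joint PMF on $\bm{X}$ they induce through $B$. Applying Prop.~\ref{prop:tian} to $P$ gives $P(\bm{z})=\prod_{Z\in\bm{Z}}P(z\mid\bm{w}_Z)$; since, by construction, the $\bm{W}_Z$ are the parent sets of an acyclic structure over $\bm{Z}$, every factor $P(z\mid\bm{w}_Z)$ is the genuine conditional probability of the marginal $P(\bm{Z})$ associated with that structure, and in particular $\hat\theta_{\bm{Z}}:=\{P(z\mid\bm{w}_Z)\}$ is a feasible parameter vector of the empirical BN (its entries are nonnegative and sum to one over $z$ for each $\bm{w}_Z$ with $P(\bm{w}_Z)>0$). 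Plugging this factorisation into Eq.~\eqref{eq:lik}, swapping the order of summation, and collecting frequencies --- using that $\log P(z\mid\bm{w}_Z)$ depends on $\bm{z}$ only through $(z,\bm{w}_Z)$, and that $n(z,\bm{w}_Z)$ is the total count of the data consistent with $(z,\bm{w}_Z)$ --- one would obtain
\[
l(\theta_{\bm{Y}},\theta_{\bm{Z}}')=\sum_{Z\in\bm{Z}}\sum_{\bm{w}_Z}\sum_{z}n(z,\bm{w}_Z)\,\log P(z\mid\bm{w}_Z)=\lambda(\hat\theta_{\bm{Z}})\,,
\]
the last equality being the very same regrouping applied to Eq.~\eqref{eq:multilik}.

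Claim (i) is then immediate, since $\lambda(\hat\theta_{\bm{Z}})\le\lambda^*$ by definition of $\lambda^*$ as the global maximum of $\lambda$. For the equivalence, recall that $\lambda$ splits as a sum, over $Z\in\bm{Z}$ and over the configurations $\bm{w}_Z$, of independent multinomial contributions $\sum_z n(z,\bm{w}_Z)\log\theta_{z\mid\bm{w}_Z}$; each contribution with $n(\bm{w}_Z)>0$ is maximised \emph{uniquely} at $\theta_{z\mid\bm{w}_Z}=n(z,\bm{w}_Z)/n(\bm{w}_Z)$ (the strict Gibbs inequality, which is what makes the maximum in Eq.~\eqref{eq:freqs} unique), whereas each contribution with $n(\bm{w}_Z)=0$ vanishes identically and constrains nothing. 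Hence $l(\theta_{\bm{Y}},\theta_{\bm{Z}}')=\lambda^*$ if and only if $P(z\mid\bm{w}_Z)=n(z,\bm{w}_Z)/n(\bm{w}_Z)$ for every $Z\in\bm{Z}$ and every $\bm{w}_Z$ actually occurring in $\mathcal{D}$, i.e.\ if and only if the conditional probabilities inferred from $B$ satisfy Eq.~\eqref{eq:freqs} --- which is precisely the definition of $\mathcal{D}$ being compatible with $B$.

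The steps requiring a little care, and the nearest thing to a real obstacle, are: justifying that the factors supplied by Prop.~\ref{prop:tian} are honest conditionals of $P(\bm{Z})$ (so that $\hat\theta_{\bm{Z}}$ is feasible and Eq.~\eqref{eq:lik} rewrites as above); and the bookkeeping for degenerate configurations --- if $P(\bm{w}_Z)=0$ for some $\bm{w}_Z$ with $n(\bm{w}_Z)>0$, then $P(\bm{z})=0$ for an observed $\bm{z}$, so $l=-\infty<\lambda^*$ while at the same time compatibility fails (the inferred conditional is undefined or is zero against a positive count), leaving the equivalence intact, and configurations with $n(\bm{w}_Z)=0$ restrict neither side. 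An alternative to this direct argument, announced in Sect.~\ref{sec:intro}, is to first recode each internal CPT as a deterministic function of its parents and of a fresh latent root variable and then invoke \cite[Thm.~1]{zaffalon2021,zaffalon2023a}; the direct route, though, needs no genuinely hard step once Prop.~\ref{prop:tian} is in hand.
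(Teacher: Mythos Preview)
Your argument is correct, but it takes a genuinely different route from the paper's. The paper does \emph{not} argue directly from Prop.~\ref{prop:tian}; instead it carries out precisely the ``alternative'' you mention at the end: for each internal node $Z$ it introduces an auxiliary root $U_Z$ with $|\Omega_{U_Z}|=|\Omega_Z|^{|\Omega_{\mathrm{Pa}_Z}|}$, sets $P(U_Z):=\prod_{\mathrm{pa}_Z}P(Z\mid\mathrm{pa}_Z)$, and replaces $P(Z\mid\mathrm{Pa}_Z)$ by a deterministic CPT $P(Z\mid\mathrm{Pa}_Z,U_Z)$. The resulting BN $B'$ has only deterministic internal CPTs, induces the same $P(\bm Z)$ and hence the same empirical BN and the same $\lambda^*$, and the map $\theta_{\bm Z}'\mapsto\theta_{\bm U}$ makes $l(\theta_{\bm Y},\theta_{\bm Z}')=l'(\theta_{\bm Y},\theta_{\bm U})$; both claims then follow by invoking \cite[Thm.~1]{zaffalon2021,zaffalon2023a}, which is the present theorem in the deterministic special case. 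Your direct route is more self-contained---it needs nothing beyond Prop.~\ref{prop:tian} and the standard Gibbs/multinomial uniqueness argument, and it handles the degenerate $P(\bm w_Z)=0$ cases explicitly---whereas the paper's reduction black-boxes those details inside the cited theorem but, in exchange, exhibits a clean structural equivalence between the general and the deterministic (structural-causal-model) settings.
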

\begin{proof}
Given the BN $B$ over $\bm{X}$ in Sect.~\ref{sec:bn}, for each internal node $Z\in\bm{Z}$, do the following (check Figs.~\ref{fig:bn},~\ref{fig:scm}):
\begin{enumerate}[(i)]
    \item Add an auxiliary root node $U_Z$ as a parent of $Z$.
    \item Let $|\Omega_{U_Z}|\coloneqq|\Omega_Z|^{|\Omega_{\mathrm{Pa}_Z}|}$ and define the marginal PMF $P(U_Z)$ as the joint PMF induced by the conditional PMFs over $Z$ in the CPT for the different values of $\mathrm{Pa}_Z$ under an independence assumption:
    \begin{equation}\label{eq:map}
P(U_Z):=\prod_{\mathrm{pa}_Z\in\Omega_{\mathrm{Pa}_Z}}P(Z|\mathrm{pa}_Z)\,.
    \end{equation} 
    Note that each state $u_Z$ corresponds to a collection of states of $Z$, one per each state of $\mathrm{Pa}_Z$. Denote such a vector by $u_Z^{-1}$. 
    \item Replace the original CPT of $Z$, i.e., $P(Z|\mathrm{Pa}_Z)$, with a deterministic CPT $P(Z|\mathrm{Pa}_Z,U_Z)$ that assigns all the mass to the state of $Z$ associated with $(u_Z,\mathrm{pa}_Z)$:
    \begin{equation*}
P(Z|\mathrm{pa}_Z,u_Z):= 
\begin{cases}
    1& \text{if } z=u^{-1}_Z[\mathrm{pa}_Z]\\
    0              & \text{otherwise\,.}
\end{cases}
\end{equation*}     
\end{enumerate}
The above transformation returns a new BN $B'$ over $\bm{X} \cup \{U_Z\}_{Z\in\bm{Z}}$ whose internal CPTs are deterministic.\footnote{We thank Rina Dechter and Denis Mau\'a for pointing us to such a transformation that appears to be common knowledge but seemingly not formalised in the literature (or present in old work that we could not find).}
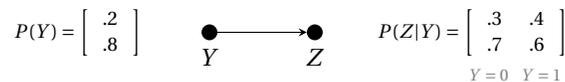
\begin{figure}[htp!]
\centering
\begin{tikzpicture}[scale=0.7]
\node[dot,label=below:{$Y$}] (y)  at (0,0) {};
\node[dot,label=below:{$Z$}] (z)  at (2,0) {};
\draw[a2] (y) -- (z);
\node[] at (-2.5,0) {\footnotesize $P(Y)=\left[\begin{array}{c}.2\\.8\end{array}\right]$};
\node[] at (5,0) {\footnotesize $P(Z|Y)=\left[\begin{array}{cc}.3 & .4\\.7&.6\end{array}\right]$};
\node[] at (5.3,-0.8) {\scriptsize\color{black!50}{$Y=0$}};
\node[] at (6.3,-0.8) {\scriptsize\color{black!50}{$Y=1$}};
\end{tikzpicture}
\caption{A BN over two Boolean variables and its CPTs.}\label{fig:bn}
\end{figure}

\begin{figure}[htp!]
\centering
\begin{tikzpicture}[scale=0.7]
\node[dot,label=below:{$Y$}] (y)  at (0,0) {};
\node[dot,label=below:{$Z$}] (z)  at (2,0) {};
\draw[a2] (y) -- (z);
\node[dot,label=above:{$U_Z$}] (uz)  at (2,1.5) {};
\draw[a2] (uz) -- (z);
\node[] at (-2.5,0) {\footnotesize $P(Y)=\left[\begin{array}{c}.2\\.8\end{array}\right]$};
\node[] at (6,2) {\footnotesize $P(U_Z)=\left[\begin{array}{c}0.3 \times 0.4\\0.3 \times 0.6\\0.7 \times 0.4\\0.7 \times 0.6\\ \end{array}\right]=\left[\begin{array}{c}0.12\\0.18\\0.28\\0.42\\ \end{array}\right]$};
\node[] at (7,0) {\footnotesize $P(Z|Y,U_Z)=\left[\begin{array}{cccc|cccc}1&1&0&0&1&0&1&0\\
0&0&1&1&0&1&0&1\end{array}\right]$};
\node[] at (6.7,-0.8) {\scriptsize\color{black!50}{$Y=0$}};
\node[] at (9.5,-0.8) {\scriptsize\color{black!50}{$Y=1$}};
\node[] at (6.7,-1.3) {\scriptsize\color{black!50}{$U_Z=0$ $1$ $2$ $3$}};
\node[] at (9.5,-1.3) {\scriptsize\color{black!50}{$U_Z=0$ $1$ $2$ $3$}};
\end{tikzpicture}
\caption{A BN with deterministic internal CPTs.}\label{fig:scm}
\end{figure}
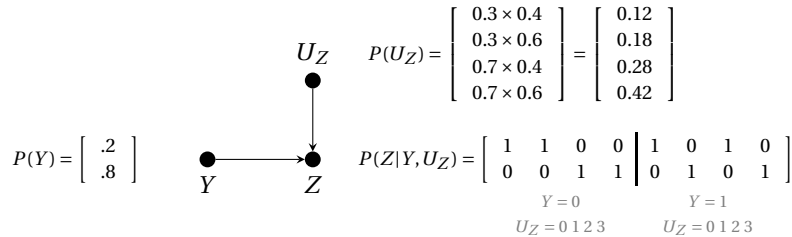

It is straightforward to check that, in the model $B'$ in Fig.~\ref{fig:scm}:
\begin{equation*}
P(Z=0|Y=0) = \sum_{u_Z} P(Z=0|Y=0,u_Z) \cdot P(u_Z) = 0.12 + 0.18 = 0.3\,,
\end{equation*}
that is the same value of $P(Z=0|Y=0)$ in $B$. We can similarly retrieve all the elements of $P(Z|Y)$. As $P(Y)$ is the same in both BNs, we also have the same joint $P(Y,Z)$ and, hence, the same $P(Z)$. In the general case, by considering the parameters of $B'$ on the left-hand side and those of $B$ on the right, we can similarly obtain:
\begin{equation}\label{eq:comp}
\sum_{u_Z} P(z|\mathrm{pa}_Z,u_Z) \cdot P(u_Z) = P(z|\mathrm{pa}_Z)\,,
\end{equation}
for each $z$ and $\mathrm{pa}_Z$, with $Z\in\bm{Z}$. This means that $P(\bm{Y},\bm{Z})$ is the same for both $B$ and $B'$. Thus, the BNs $B$ and $B'$ define the same joint PMF $P(\bm{Z})$ over their internal nodes, but the internal CPTs of $B'$ are deterministic, while those of $B$ are, in general, not. 

Recall that \cite[Thm.~1]{zaffalon2021,zaffalon2023a} proved the result we are after in the special case where internal CPTs are deterministic and fixed, and regarding only the marginal PMFs on the root nodes as parameters. The result therefore holds for $B'$, provided that we regard the log-likelihood of $\mathcal{D}$ from $B'$ as:
\begin{equation}\label{eq:ll}
l'(\theta_{\bm{Y}},\theta_{\bm{U}}):=
\sum_{\bm{z}} n(\bm{z}) \log \sum_{\bm{y}} \left[ \prod_{Y\in\bm{Y}} \theta_{y} \cdot \prod_{Z\in\bm{Z}} \sum_{u_Z} p(z|\mathrm{pa}_{Z},u_Z) \cdot \theta_{u_Z}
\right]
\,,
\end{equation}
where $\theta_{\bm{U}}$ is the collection of all $\theta_{u_Z}:=P(u_Z)$ for each $u_Z$, with $Z\in\bm{Z}$. 

Because of Eq.~\eqref{eq:comp}, we have that Eq.~\eqref{eq:lik} coincides with Eq.~\eqref{eq:ll} provided that we map $\theta_{\bm{Z}}'$ to $\theta_{\bm{U}}$ by Eq.~\eqref{eq:map}. As $B$ and $B'$ share the same $P(\bm{Z})$, we also have that the compatibility of $\mathcal{D}$ with $B$ is equivalent to that of $B'$.  Finally, note that the two BNs also share the same empirical BN, and hence the same global maximum $\lambda^*$. 

To prove the first statement of the theorem, assume, \emph{ad absurdum}, $l(\theta_{\bm{Y}},\theta_{\bm{Z}}')>\lambda^*$. This would imply, for the corresponding value of $\theta_{\bm{U}}$, $l'(\theta_{\bm{Y}},\theta_{\bm{U}})>\lambda^*$. But this is impossible, because $B'$ satisfies the theorem.

The second statement of the theorem says that $\mathcal{D}$ is compatible with $B$ if and only if $l(\theta_{\bm{Y}},\theta_{\bm{Z}}')=\lambda^*$. The condition on the log-likelihood can be written as
$l'(\theta_{\bm{Y}},\theta_{\bm{U}})=\lambda^*$, while the compatibility condition can be equivalently required for $B'$. As $B'$ satisfies the theorem, we have the thesis.
\end{proof}

\section{Conclusions}\label{sec:conc}
We have provided a characterisation of the likelihood function for Bayesian nets with latent variables as root nodes. Analogously to what is done in \cite{zaffalon2021} in a specific setup, the result can be used to evaluate whether the EM algorithm attains the global maximum. As a future work, we intend to extend our result to the case of continuous variables.


\begin{thebibliography}{1}

\bibitem{koller2009}
D.~Koller and N.~Friedman.
\newblock {\em Probabilistic Graphical Models: Principles and Techniques}.
\newblock MIT, 2009.

\bibitem{tian2002studies}
J.~Tian.
\newblock {\em Studies in {C}ausal {R}easoning and {L}earning}.
\newblock PhD thesis, UCLA, 2002.

\bibitem{wang2006}
Y.~Wang and N.~L. Zhang.
\newblock Severity of local maxima for the {EM} algorithm: Experiences with
  hierarchical latent class models.
\newblock In {\em Proceedings of the Third European Workshop on Probabilistic
  Graphical Models}, pages 301--308. Action {M} {A}gency, 2006.

\bibitem{zaffalon2021}
M.~Zaffalon, A.~Antonucci, and R.~Caba\~nas.
\newblock Causal expectation-maximisation.
\newblock {\em Why-21@NeurIPS}, 2021.

\bibitem{zaffalon2023a}
M.~Zaffalon, A.~Antonucci, R.~Caba\~nas, D.~Huber, and D.~Azzimonti.
\newblock Efficient computation of counterfactual bounds.
\newblock {\em International Journal of Approximate Reasoning}, page 109111,
  2024.

\end{thebibliography}
\end{document}